\documentclass[sigconf]{acmart}

\usepackage{graphicx}
\usepackage{amsfonts}
\usepackage{todonotes}
\usepackage{amsthm}

\newtheorem{theorem}{Theorem}
\theoremstyle{definition}
\newtheorem{definition}{Definition}

\usepackage[noend]{algpseudocode}
\usepackage[ruled]{algorithm}
\usepackage{multirow}
\usepackage{subfig}
\usepackage{stmaryrd}
\usepackage{url}

\usepackage{tikz}
\usetikzlibrary{arrows.meta, shapes.geometric, positioning}

\newcommand{\R}{\mathbb{R}}
\newcommand{\feats}{\mathcal{X}}
\newcommand{\dataset}{\mathcal{D}}
\newcommand{\dtrain}{\dataset_{train}}
\newcommand{\dtrigger}{\dataset_{trigger}}
\newcommand{\dtest}{\dataset_{test}}
\newcommand{\hyps}{\mathcal{H}}
\newcommand{\labels}{\mathcal{Y}}
\newcommand{\convert}[1]{\llbracket #1 \rrbracket}

\newcommand{\so}[1]{{#1}}

\AtBeginDocument{%
  }



\begin{document}

\title{Watermarking Decision Tree Ensembles}

\author{Stefano Calzavara}
\email{stefano.calzavara@unive.it}
\affiliation{%
  \institution{Università Ca' Foscari Venezia}
  \city{Venice}
  \country{Italy}
}

\author{Lorenzo Cazzaro}
\email{lorenzo.cazzaro@unive.it}
\affiliation{%
  \institution{Università Ca' Foscari Venezia}
  \city{Venice}
  \country{Italy}
}

\author{Donald Gera}
\email{892604@stud.unive.it}
\affiliation{%
  \institution{Università Ca' Foscari Venezia}
  \city{Venice}
  \country{Italy}
}

\author{Salvatore Orlando}
\email{orlando@unive.it }
\affiliation{%
  \institution{Università Ca' Foscari Venezia}
  \city{Venice}
  \country{Italy}
}

\begin{abstract}
Protecting the intellectual property of machine learning models is a hot topic and many watermarking schemes for deep neural networks have been proposed in the literature. Unfortunately, prior work largely neglected the investigation of watermarking techniques for other types of models, including decision tree ensembles, which are a state-of-the-art model for classification tasks on non-perceptual data. In this paper, we present the first watermarking scheme designed for decision tree ensembles, focusing in particular on random forest models. We discuss watermark creation and verification, presenting a thorough security analysis with respect to possible attacks. We finally perform an experimental evaluation of the proposed scheme, showing excellent results in terms of accuracy and security against the most relevant threats.
\end{abstract}

\begin{CCSXML}
<ccs2012>
   <concept>
       <concept_id>10010147.10010257.10010321.10010333</concept_id>
       <concept_desc>Computing methodologies~Ensemble methods</concept_desc>
       <concept_significance>500</concept_significance>
       </concept>
   <concept>
       <concept_id>10002978.10002991.10002996</concept_id>
       <concept_desc>Security and privacy~Digital rights management</concept_desc>
       <concept_significance>500</concept_significance>
       </concept>
 </ccs2012>
\end{CCSXML}

\ccsdesc[500]{Computing methodologies~Ensemble methods}
\ccsdesc[500]{Security and privacy~Digital rights management}

\keywords{Watermarking, Intellectual Property, Tree Ensembles}

\received{20 February 2007}
\received[revised]{12 March 2009}
\received[accepted]{5 June 2009}

\maketitle

\section{Introduction}

Machine learning models are pervasively used and are often considered intellectual property of the legitimate parties who have trained them. This is often a consequence of the incredible number of computational resources required for model training. For example, it has been estimated that even a relatively small model like GPT-3 might cost around 5M dollars for training on the cloud~\cite{GPT3}. This motivated a significant amount of research on \emph{model watermarking}, in particular for deep neural networks~\cite{Boenisch21, LiWB21, UchidaNSS17}. A watermark is a piece of identifying information which is embedded into the model to claim copyright, without affecting model accuracy too much. Different watermarking schemes have been proposed with different properties, e.g., the literature distinguishes between zero-bit watermarking~\cite{NambaS19, TangJDWJH23, AdiBCPK18, KuribayashiTF20, SzyllerAMA21, RouhaniCK19, ZhangCLFZZCY20} and multi-bit watermarking~\cite{abs-1904-00344, UchidaNSS17, TartaglioneGCB20, FanNC19, GuoP18, TangDH23, RouhaniCK19}, based on the amount of information embedded in the watermark.

Although watermarking received a great deal of attention in the field of deep neural networks, it was not carefully investigated for other types of machine learning models for different reasons. First, some models are shallow in the sense that they are not over-parameterized and redundant, lacking room to effectively embed watermarks. Moreover, traditional machine learning models require way less computational resources for training than deep neural networks. Yet, the process of collecting high-quality training data, cleaning them and in some cases even manual labeling them should be performed for any type of supervised learning algorithm. This process is generally time-consuming and expensive, thus making copyright protection of highly effective models trained over high-quality datasets an urgent practical need.

\paragraph{Contributions} In this paper, we present the first watermarking scheme designed for \emph{decision tree ensembles}, which are a state-of-the-art model for classification tasks on non-perceptual data. We discuss watermark creation and verification, presenting a thorough security analysis with respect to possible attacks, including watermark detection, watermark suppression and watermark forgery. In particular, we prove that the watermark forgery problem is NP-hard, thus providing a theoretical guarantee about the effectiveness of the proposed method. We finally perform an experimental evaluation of the proposed scheme, showing excellent results in terms of accuracy and security against the most relevant threats.
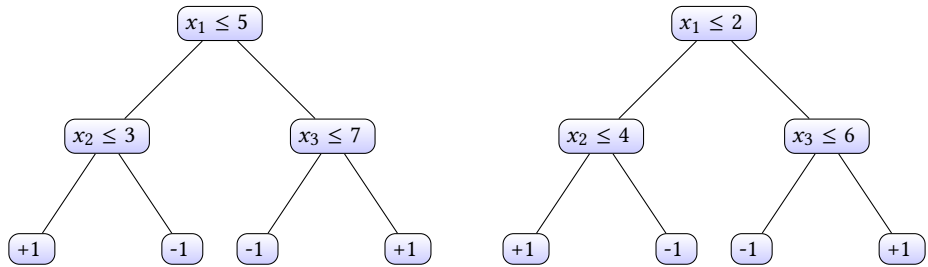
\begin{figure*}[t]
\centering
\begin{tikzpicture}[level 1/.style={sibling distance=3cm},level 2/.style={sibling distance=2cm},  every node/.style = {shape=rectangle, rounded corners, draw, align=center, top color=white, bottom color=blue!20}]

Tree 1
\node { $x_1 \leq 5$ }
 child { node { $x_2 \leq 3$ }
   child { node { +1 } }
   child { node { -1 } }
 }
 child { node { $x_3 \leq 7$ }
   child { node { -1 } }
   child { node { +1 } }
 };

Tree 2
\node[right=6cm] { $x_1 \leq 2$ }
 child { node { $x_2 \leq 4$ }
   child { node { +1 } }
   child { node { -1 } }
 }
 child { node { $x_3 \leq 6$ }
   child { node { -1 } }
   child { node { +1 } }
 };

\end{tikzpicture}
\caption{Example of a decision tree ensemble with two trees.}
\label{fig:ensemble}
\end{figure*}

\section{Background}
Let $\feats \subseteq \R^d$ be a $d$-dimensional vector space of real-valued \textit{features}. An \emph{instance} $\vec{x} \in \feats$ is a $d$-dimensional feature vector $\langle x_1, x_2, \ldots, x_d \rangle$ representing an object in the vector space $\feats$. Each instance is assigned a class label $y \in \labels$ by an unknown function $f: \feats \rightarrow \labels$. Supervised learning algorithms learn a \emph{classifier} $g: \feats \rightarrow \labels$ from a \emph{training set} of correctly labeled instances $\dtrain = \{(\vec{x}_i,f(\vec{x}_i))\}_i$, with the goal of approximating the target function $f$ as accurately as possible. The performance of classifiers is assessed on a \emph{test set} of correctly labeled instances $\dtest = \{(\vec{z}_i,f(\vec{z}_i))\}_i$, disjoint from the training set, yet drawn from the same data distribution.

In this paper, we focus on a specific class of supervised learning algorithms training traditional \emph{binary decision trees} for classification~\cite{BreimanFOS84}. Decision trees can be inductively defined as follows: a decision tree $t$ is either a leaf $L(y)$ for some label $y \in \labels$ or an internal node $N(f \leq v, t_l, t_r)$, where $f \in \{1,\ldots,d\}$ identifies a feature, $v \in \R$ is a threshold for the feature, and $t_l,t_r$ are decision trees (left and right child). Decision trees are learned by initially putting all the training set into the root of the tree and by recursively splitting leaves (initially: the root) by identifying the threshold therein leading to the best split of the training data, e.g., the one with the highest information gain, thus transforming the split leaf into a new internal node. At test time, the instance $\vec{x}$ traverses the tree $t$ until it reaches a leaf $L(y)$, which returns the prediction $y$, denoted by $t(\vec{x}) = y$. Specifically, for each traversed tree node $N(f \leq v, t_l, t_r)$, $\vec{x}$ falls into the left sub-tree $t_l$ if $x_f \leq v$, and into the right sub-tree $t_r$ otherwise. To improve their performance, decision trees are often combined into an \emph{ensemble} $T = \langle t_1,\ldots,t_m \rangle$, which aggregates individual tree predictions, e.g., by performing majority voting. Figure~\ref{fig:ensemble} shows an example ensemble including $m = 2$ decision trees.
\section{Ensemble Watermarking}
We first motivate our key design choices and we introduce the threat model considered in this work. We then present our watermarking scheme and security analysis.

\subsection{Design Choices and Threat Model}
We explain the key design choices and the threat model using the terminology of a recent survey~\cite{Boenisch21}. Our watermark is embedded during the training phase by means of a \emph{trigger set}, i.e., a set of instances evoking unusual prediction behavior in the watermarked model. The watermark is \emph{multi-bit}, i.e., it embeds a binary signature of the model owner into the model behavior, and provides \emph{authentication}, i.e., the legitimate model owner may claim copyright in front of a legal entity. Verification is \emph{black-box}, i.e., the legitimate model owner may access the potentially stolen model solely through queries and has no visibility of the model parameters.

We assume that the attacker has illegitimate white-box access to the watermarked model. We also assume that the attacker does not modify the model in any way, e.g., due to some form of integrity protection or because they do not want to risk reducing model accuracy at test time. For example, the model might be integrated as part of a production software that allows the attacker to inspect and query the model, but not modify it. This is in line with prior work which observed that it is extremely difficult to draw a line between adapting an existing model and creating an entirely different model on its own~\cite{GuoP18}. Our watermarking scheme is designed to mitigate the following threats:
\begin{enumerate}
    \item \emph{Watermark detection}: the attacker should be unable to detect the presence of the watermark. This is important to limit the attacker's knowledge, making it easier to catch them red-handed when they use the model and preventing room for additional attacks against the watermark.

    \item \emph{Watermark suppression}: the attacker should be unable to identify the queries involving the trigger set, otherwise they might change the model predictions over the trigger set to make black-box verification fail, thus rendering the watermark useless in practice.

    \item \emph{Watermark forgery}: the attacker should be unable to construct a valid watermark, otherwise they may unduly claim ownership of the stolen model.
\end{enumerate}

\subsection{Watermarking Scheme}
Our method is reminiscent of the watermarking scheme proposed for deep neural networks by Guo and Potkonjak~\cite{GuoP18}. Their scheme generates a binary signature of the model owner and embeds it into the training data in order to generate the trigger set. Instances from the trigger set obtain different labels than the original data points that they were based on, hence the model exhibits abnormal behavior on them. Watermark verification is performed by confirming the abnormal behavior of the model on the trigger set, in their case a significant accuracy drop with respect to a traditional model trained over the same training data. In our case, we instead use the signature to encode a specific model behavior that the trees in the ensembles are required to show on the trigger set.

We focus on random forest models without bootstrap, leaving the generalization to more sophisticated ensemble methods to future work. In these models, each tree is a classifier trained on a subset of the features of the entire training set and the final prediction is computed by aggregating individual tree predictions, e.g., using majority voting. We assume that the output of the ensemble is the sequence of the class predictions performed by each tree. For example, in R the \texttt{predict.all} field is exactly used for this purpose and a similar behavior may be easily encoded in sklearn by creating a wrapper of the \texttt{RandomForestClassifier} class. For simplicity, we focus on binary classification tasks, i.e., the set of labels $\labels$ contains just a positive class +1 and a negative class -1. Multi-class classification can be supported by encoding it in terms of multiple binary classification tasks.

\begin{algorithm}[t]
\caption{Watermark creation algorithm}
\label{alg:watermark}
\begin{algorithmic}[1]
\Function{TrainWithTrigger}{$\dtrain, \dtrigger, m, \hyps$}
    \State{$\hyps \gets \Call{Adjust}{\hyps}$} \Comment{Adjust $\hyps$ to hide the watermark}
    \State{$W \gets \{(\vec{x},y) \mapsto 1 ~|~ (\vec{x},y) \in \dtrain\}$} \Comment{Sample weights for training}
    \State{$T \gets \Call{TrainRandomForest}{\dtrain, m, \hyps, W}$}
    \While{$\exists t_i \in T: \exists (\vec{x},y) \in \dtrigger: t_i(\vec{x}) \neq y$}
        \For{$(\vec{x}, y) \in \dtrigger$}
            \State{$W[(\vec{x},y)] \gets W[(\vec{x},y)] + 1$} \Comment{Increase sample weights}
        \EndFor
        \State{$T \gets \Call{TrainRandomForest}{\dtrain, m, \hyps, W}$}
    \EndWhile
    \State{\Return{$T$}}
\EndFunction
\State{}
\Function{Watermark}{$\dtrain, m, \sigma, k$}
\State{$\hyps \gets \Call{GridSearch}{\dtrain,m}$} \Comment{Find hyper-parameters}
\State{$\dtrigger \gets \Call{Sample}{\dtrain, k}$} \Comment{Random sampling}
\State{$m' \gets |\{1 \leq i \leq m ~|~ \sigma_i = 0\}|$}
\State{$T_0 \gets \Call{TrainWithTrigger}{\dtrain, \dtrigger, m', \hyps}$}
\State{$\dtrigger' \gets \{(\vec{x},-y) ~|~ (\vec{x},y) \in \dtrigger \}$} \Comment{Flip labels}
\State{$\dtrain \gets (\dtrain \setminus \dtrigger) \cup \dtrigger'$}
\State{$T_1 \gets \Call{TrainWithTrigger}{\dtrain, \dtrigger', m - m', \hyps}$}
\State{$T \gets \{\}$}
\For{$i \in \{1, \ldots, m\}$}
    \If{$\sigma_i = 0$}
       $T[i] \gets \Call{GetNextTree}{T_0}$
    \Else\
        $T[i] \gets \Call{GetNextTree}{T_1}$
    \EndIf
\EndFor
\State{\Return{$\langle T, \dtrigger \rangle$}}
\EndFunction
\end{algorithmic}
\end{algorithm}

Our watermarking scheme is shown in the \textsc{Watermark} function of Algorithm~\ref{alg:watermark} (lines 11-23). It takes as input a training set $\dtrain$, the number of trees in the ensemble $m$, the signature of the model owner $\sigma$ (of length $m$) and the size of the trigger set $k \ll |\dtrain|$. We denote by $m'$ the number of bits of $\sigma$ set to 0, hence $m-m'$ is the number of bits of $\sigma$ set to 1. Associated with $\sigma$, we have a subset of samples of the training set, denoted by $\dtrigger$, where each tree of the ensemble must either classify correctly or misclassify based on the setting of the bits of $\sigma$. Specifically, the $i$-th tree in the ensemble is forced to classify correctly if the $i$-th bit of $\sigma$ is set to 0, and to misclassify otherwise. This specific output pattern is used for watermark verification and, as we argue, it is difficult to reproduce out of the trigger set, thus mitigating the risk of watermark forgery.

The algorithm first uses grid search to find the best model hyper-parameters $\hyps$ for an ensemble of $m$ trees. After sampling a trigger set $\dtrigger \subseteq \dtrain$ of size $k$, the algorithm trains two ensembles $T_0$ and $T_1$ with hyper-parameters $\hyps$ using the \textsc{TrainWithTrigger} function (lines 1-9). The function uses sample weighting to force a specific model behavior on $\dtrigger$. Specifically, all the $m'$ trees of $T_0$ perform correct predictions for all samples of $\dtrigger$, while all the $m-m'$ trees of $T_1$ misclassify by predicting the opposite label. Before training $T_0$ and $T_1$, the function adjusts $\hyps$ to make the two ensembles look structurally more similar to each other to prevent watermark detection. In particular, we observe that trees in $T_1$ may have a stronger tendency at overfitting than trees in $T_0$. The reason is that $T_1$ operates abnormally on the trigger set, i.e., we force prediction errors there, which often pushes trees in $T_1$ to grow larger than trees in $T_0$. To mitigate this effect, we train a standard tree ensemble with the hyper-parameters $\hyps$ and we adjust them as follows. First, we identify the average and standard deviation of the different hyper-parameters (depth and number of leaves) observed in the trained model. We then update $\hyps$ to the difference between the average and the standard deviation, i.e., we force both depth and number of leaves to be lower than the average. This simple heuristic prevents $T_1$'s trees from growing much more that the ones in $T_0$, while still overfitting the expected wrong output on the trigger set. Moreover, $T_0$ does not deviate too much with respect to a standard ensemble trained with the goal of minimizing prediction errors over the training data. The effect is that the trees in $T_0$ and $T_1$ look similar to each other, while largely preserving model accuracy.

At the end of the algorithm, the watermarked ensemble $T$ is constructed by picking its $i$-th tree from $T_0$ if the $i$-th bit of $\sigma$ is 0 and from $T_1$ otherwise. The algorithm returns a pair $\langle T,\dtrigger \rangle$ including the watermarked ensemble and the trigger set. As for watermark verification, assume Alice has watermarked her model using our algorithm and wants to sue Bob as an illegitimate user of the model. Alice gives to the legal authority Charlie the following information: her signature $\sigma$, the trigger set $\dtrigger$ and a set of test data $\dtest$ such that $\dtrigger \subseteq \dtest$. Charlie feeds $\dtest$ to Bob's model and retrieves the predictions associated with $\dtrigger$. Charlie then verifies that all the instances in $\dtrigger$ are classified correctly by some $t_i \in T$ iff $\sigma_i = 0$. The use of $\dtest$ is useful to prevent watermark suppression by disguising $\dtrigger$ among other instances fed in the verification phase.

\begin{figure*}[t]
\centering
\begin{tikzpicture}[level 1/.style={sibling distance=3cm},level 2/.style={sibling distance=2cm},  every node/.style = {shape=rectangle, rounded corners, draw, align=center, top color=white, bottom color=blue!20}]

\node { $x_1 \leq 0$ }
  child { node { $x_2 \leq 0$ }
    child { node { -1 } }
    child { node { +1 } }
  }
  child { node { +1 }
  };

\node[right=5cm] { $x_2 \leq 0$ }
  child { node { $x_3 \leq 0$ }
    child { node { $x_4 \leq 0$ }
        child { node { +1 } }
        child { node { - 1} } }
    child { node { +1 } }
  }
  child { node { +1 }
  };
\end{tikzpicture}
\caption{Conversion of the example formula $(x_1 \vee x_2) \wedge (x_2 \vee x_3 \vee \neg x_4)$ into a tree ensemble.}
\label{fig:reduction}
\end{figure*}
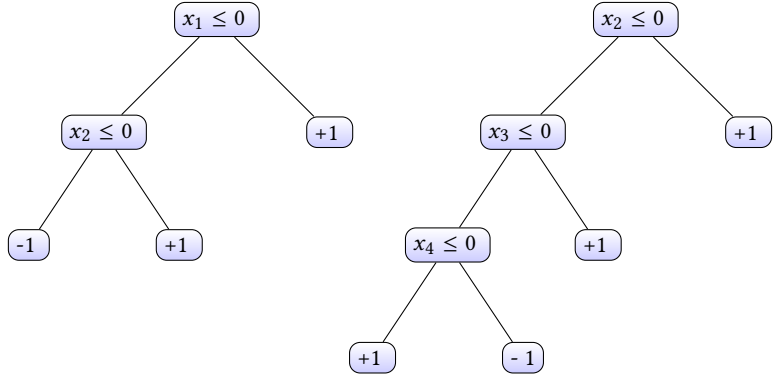

\subsection{Security Analysis}
We argue about the security of our watermarking scheme and we present an empirical validation of our claims in Section~\ref{sec:experiments}. Our scheme is robust against watermark detection because the trees of $T$ are trained using hyper-parameters tuned by training traditional tree ensembles, i.e., the watermarked ensemble has a similar structure to a standard model. Although hyper-parameters are adjusted as explained before, the attacker does not know the optimal value of the hyper-parameters and cannot infer the adoption of watermarking from the ensemble structure alone. Most importantly, trees in $T_0$ and $T_1$ are trained using adjusted hyper-parameters forcing them to look similar in terms of depth and number of leaves, hence the correct signature $\sigma$ cannot be reconstructed by inspecting the structure of the trees in the ensemble.

Moreover, our scheme is robust against watermark suppression because $\dtrigger$ is a subset of $\dtrain$. This means that $\dtrigger$ is sampled from the same distribution of the training data, which are themselves assumed to be representative of the distribution of the test data (otherwise, learning would be ineffective). In other words, data in the trigger set are indistinguishable from standard test data and cannot be easily detected by the attacker during the watermark verification phase, which means that the attacker cannot maliciously adapt the model output on $\dtrigger$.

Finally, our scheme is robust against watermark forgery. Assume that the attacker does not know the signature $\sigma$ and the trigger set $\dtrigger$, but generates a fake signature $\sigma'$ and tries to forge a trigger set $\dtrigger'$ where the watermarked model exhibits the output pattern required by $\sigma'$. This is equivalent to solving a satisfiability problem for a logical formula encoding the expected model output. To exemplify, consider the tiny ensemble with two trees in Figure~\ref{fig:ensemble} and let $\sigma' = 01$ be the fake signature. Forging a positive instance $\langle \vec{x},+1 \rangle$ matching the fake signature $\sigma'$ is equivalent to finding a satisfying assignment for the following logical formula:
\begin{align*}
\phi & \triangleq ((x_1 \leq 5 \wedge x_2 \leq 3) \vee (x_1 > 5 \wedge x_3 > 7)) \\
& \wedge ((x_1 \leq 2 \wedge x_2 > 4) \vee (x_1 > 2 \wedge x_3 \leq 6)).
\end{align*}
A similar reasoning may be applied to forge a negative instance $\langle \vec{x},-1 \rangle$. In this toy example, it is easy to see that $\vec{x} = \langle x_1, x_2, x_3 \rangle = \langle 4,3,5 \rangle$ is a possible satisfying assignment for $\phi$. However, as the size of the ensemble grows larger, such formulas become increasingly more difficult to solve and might not even admit any satisfying assignment. Note that formulas like $\phi$ do not define a system of linear inequalities, because they involve the disjunction operator and require solving an instance of the \textit{Boolean satisfiability problem} (SAT), which is NP-hard in general. Indeed, we can provide a formal NP-hardness proof for the watermark forgery problem.
\begin{definition}
The \emph{watermark forgery problem} is defined as follows: given a tree ensemble $T$, a label $y \in \{-1,+1\}$ and a signature $\sigma$, find an instance $\vec{x}$ such that $\forall t_i \in T: t_i(\vec{x}) = y \Leftrightarrow \sigma_i = 0$.
\end{definition}

\begin{theorem}
The watermark forgery problem is NP-hard.
\end{theorem}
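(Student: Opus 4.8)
The plan is to establish NP-hardness by a polynomial-time many-one reduction from CNF-SAT, which is NP-complete by the Cook--Levin theorem. The reduction is essentially the one sketched in Figure~\ref{fig:reduction}: every clause of the input formula becomes one tree of the ensemble, and the signature is chosen so that a forged instance is exactly a satisfying assignment of the formula.

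Concretely, given a CNF formula $\varphi = C_1 \wedge \dots \wedge C_m$ over Boolean variables $z_1, \dots, z_n$, I would work over the feature space $\feats = \R^n$, using feature $j$ to represent variable $z_j$ under the convention that $x_j > 0$ encodes $z_j = \mathit{true}$ and $x_j \le 0$ encodes $z_j = \mathit{false}$. For a clause $C_i = \ell_{i,1} \vee \dots \vee \ell_{i,k_i}$ I build a decision tree $t_i$ that tests its literals one at a time along a single path: if $\ell_{i,1}$ is the positive literal $z_j$, the node $N(x_j \le 0, t_l, t_r)$ sends the $>$-branch $t_r$ to the leaf $L(+1)$ and lets the $\le$-branch $t_l$ continue with $\ell_{i,2}$; if $\ell_{i,1}$ is the negative literal $\neg z_j$, the roles of the two branches are swapped, so the $\le$-branch goes to $L(+1)$. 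The last continuing branch, reached only when all literals of $C_i$ are falsified, ends in the leaf $L(-1)$. Each $t_i$ then has exactly $k_i$ internal nodes and $k_i+1$ leaves, so $T = \langle t_1, \dots, t_m \rangle$ is built in time $O(|\varphi|)$; I finally set $y = +1$ and $\sigma = 0^m$. (A clause containing a variable and its negation is a tautology and may be discarded first; equivalently one may keep the redundant test in the path, which does not affect correctness.)

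For correctness I associate with every instance $\vec{x} \in \R^n$ the Boolean assignment $\beta_{\vec{x}}$ given by $\beta_{\vec{x}}(z_j) = \mathit{true}$ iff $x_j > 0$, and note conversely that every assignment arises from some instance, e.g. by taking $x_j \in \{-1,+1\}$. A straightforward induction on the length of the evaluation path shows that $t_i(\vec{x}) = +1$ exactly when $\beta_{\vec{x}}$ makes at least one literal of $C_i$ true, and $t_i(\vec{x}) = -1$ otherwise. Since $\sigma_i = 0$ for every $i$, an instance $\vec{x}$ solves the forgery problem for $(T, +1, \sigma)$ iff $t_i(\vec{x}) = +1$ for all $i$, i.e. iff $\beta_{\vec{x}}$ satisfies every clause of $\varphi$. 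Hence $(T, +1, \sigma)$ admits a forged instance iff $\varphi$ is satisfiable, and from any forged instance a satisfying assignment is recovered in linear time; a polynomial-time algorithm for watermark forgery would therefore decide SAT in polynomial time, which proves NP-hardness.

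The reduction itself is routine, so I do not expect a real obstacle; the only points that need care in the write-up are the encoding of negative literals by swapping the two child subtrees, and the verification that the unique $L(-1)$ leaf of $t_i$ is reached precisely on the assignments falsifying all literals of $C_i$. One secondary subtlety deserves an explicit sentence: the forgery problem is phrased as a search problem, so I should remark that an algorithm returning a forged instance (or reporting that none exists) already decides the corresponding existence question, and hence SAT.
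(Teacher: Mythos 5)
Your proof is correct and follows essentially the same route as the paper: each clause becomes a tree testing its literals along a single path with threshold $0$ (left $=$ false, right $=$ true), and the forgery instance with $y=+1$ and the all-zero signature corresponds exactly to a satisfying assignment. The only differences are cosmetic --- you reduce from general CNF-SAT rather than 3SAT and you make explicit the (harmless) search-vs-decision point --- so no further changes are needed.
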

\begin{proof}
We show a reduction from 3SAT to watermark forgery, i.e., we show that if there exists a polynomial time algorithm to solve the watermark forgery problem, then there exists a polynomial time algorithm to solve 3SAT, which is known to be NP-complete. This proves that there is no polynomial time algorithm to solve the watermark forgery problem. First of all, we recap the 3SAT problem. A boolean variable $x$ is a variable that can only take value true or false, while a literal $l$ is a boolean variable or its negation. A 3CNF formula $\phi$ is a formula of the form $\psi_1 \wedge \ldots \wedge \psi_k$ with $k \geq 1$, where each $\psi_i$ is a disjunction of three or less literals. More formally, 3CNF formulas $\phi$ are generated by the following context-free grammar: $l ::= x ~|~ \neg x \quad \psi ::= l ~|~ l \vee l ~|~ l \vee l \vee l \quad \phi ::= \psi ~|~ \phi \wedge \phi$.

An example of a 3CNF formula is $(x_1 \vee x_2) \wedge (x_2 \vee x_3 \vee \neg x_4)$.

The 3SAT problem requires, given a 3CNF formula $\phi$, to find the values of the boolean variables that make the formula true (or return a message that no such values exist). The reduction operates by first constructing an ensemble $T$ including a decision tree $t_i$ of depth three or less for each sub-formula $\psi_i$ in $\phi$, using prediction paths to encode the truth value of the literals therein. In particular, each internal node of the tree branches over the value of a variable $x_j$ occurring in $\psi_i$ with threshold 0, using the left child to represent the value false and the right child to represent the value true. We set just one of the children to have label +1, based on whether setting $x_j$ to false or to true is a sufficient condition for the satisfiability of the sub-formula $\psi_i$. The conversion from 3CNF formulas to ensembles is intuitive and exemplified in Figure~\ref{fig:reduction} for the example formula given above.

Generalization to arbitrary 3CNF formulas is conceptually simple, but technical to define. In particular, we define a conversion function $\convert{\cdot}$ by induction on the structure of the formulas as follows:
\[
\convert{l} =
\begin{cases}
N(x \leq 0, L(-1), L(+1)) & \textnormal{if } l = x \\
N(x \leq 0, L(+1), L(-1)) & \textnormal{if } l = \neg x
\end{cases}
\]
\[
\convert{\psi} =
\begin{cases}
\convert{l} & \textnormal{if } \psi = l \\
N(x \leq 0, \convert{\psi'}, L(+1)) & \textnormal{if } \psi = x \vee \psi' \\
N(x \leq 0, L(+1), \convert{\psi'}) & \textnormal{if } \psi = \neg x \vee \psi' \\
\end{cases}
\]
\[
\convert{\phi} =
\begin{cases}
\convert{\psi} & \textnormal{if } \phi = \psi \\
\langle \convert{\phi_1}, \convert{\phi_2} \rangle & \textnormal{if } \phi = \phi_1 \wedge \phi_2.
\end{cases}
\]

By construction, we have that $\phi$ is satisfiable if and only if the watermark forgery problem has a solution for the ensemble $\convert{\phi}$ using label $y = +1$ and signature $\sigma = \langle 0, \ldots 0 \rangle$. Indeed, the leaves of a tree $t_i$ with label $+1$ identify prediction paths encoding sufficient conditions for the satisfiability of the sub-formula $\psi_i$, hence finding a positive instance $\vec{x}$ such that $t_i(\vec{x}) = +1$ is equivalent to finding a satisfying assignment for $\psi_i$. The bits of $\sigma$ are all set to 0 because $\phi$ is satisfiable if and only if all the sub-formulas $\psi_i$ are satisfiable, being $\phi$ a conjunction. If a solution $\vec{x}$ is found for the watermark forgery problem, we can translate into a value assignment for 3SAT by having each variable $x_j$ set to true if and only if the $j$-th component of the solution is positive.
\end{proof}
\section{Experimental Evaluation}
\label{sec:experiments}
We implemented the proposed watermarking scheme on top of the \texttt{sklearn} library and we make our code publicly available to support reproducibility.\footnote{\url{https://zenodo.org/doi/10.5281/zenodo.13269530}} We here evaluate the accuracy of watermarked models and the security of our watermarking scheme on public datasets (MNIST2-6, breast-cancer and ijcnn1) normalized in the interval $[0,1]$. Note that MNIST2-6 includes digits representing numbers 2 and 6 from the traditional MNIST dataset, while ijcnn1 has been reduced to 10,000 instances using stratified random sampling to speed up the experimental evaluation. Table~\ref{tab:datasets} reports the most relevant dataset statistics, showing that the considered datasets are diverse in terms of number of instances, number of features and class distribution.

\begin{table}[t]
    \centering
    \caption{Dataset statistics.}
    \begin{tabular}{c|c|c|c}
    \toprule
    \textbf{Dataset} & \textbf{Instances} & \textbf{Features} & \textbf{Distribution} \\
    \midrule
    MNIST2-6 & 13,866 & 784 & $51\% / 49\%$ \\
    breast-cancer & 569 & 30 & $63\% / 37\%$ \\
    ijcnn1 & 20,000 & 22 & $10\% / 90\%$ \\
    \bottomrule
    \end{tabular}
\label{tab:datasets}
\end{table}

\subsection{Accuracy Evaluation}
Since watermarked models force a specific prediction pattern over the trigger set, their predictive power on the test data might be penalized. In our first set of experiments, we evaluate the accuracy loss introduced by our watermarking scheme. Figure~\ref{fig:accuracy1} plots how accuracy downgrades for increasing sizes of the trigger set, given a fixed randomly generated signature including 50\% of the bits set to 1. The figure shows that the accuracy loss is limited in general and even negligible when the size of the trigger set does not exceed 2\%.

\begin{figure}[t]
  \centering
    \subfloat{\label{fig:accuracy1}\includegraphics[width=.4\textwidth]{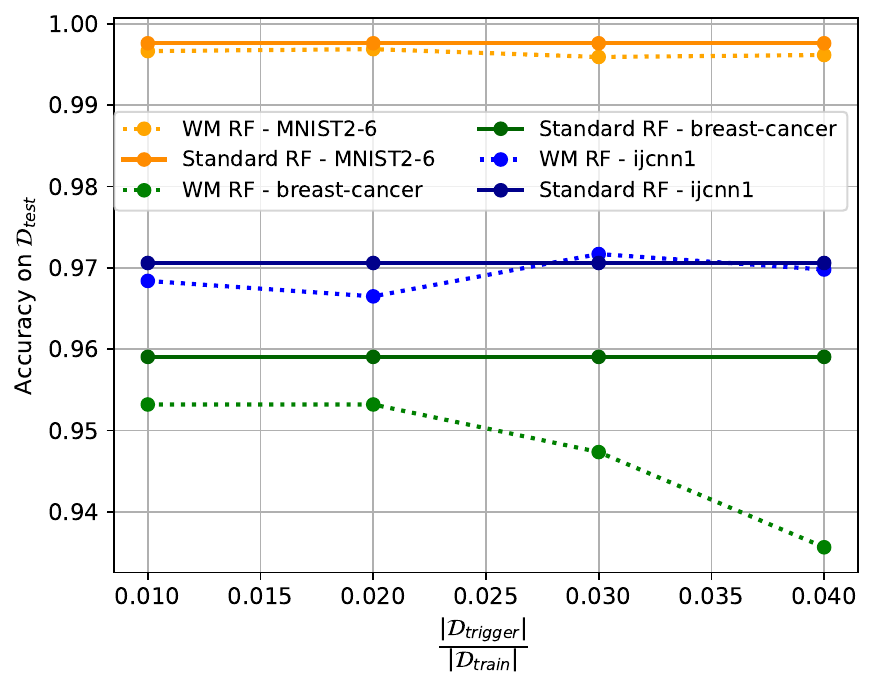}}
  \hfill
    \subfloat{\label{fig:accuracy2}\includegraphics[width=.4\textwidth]{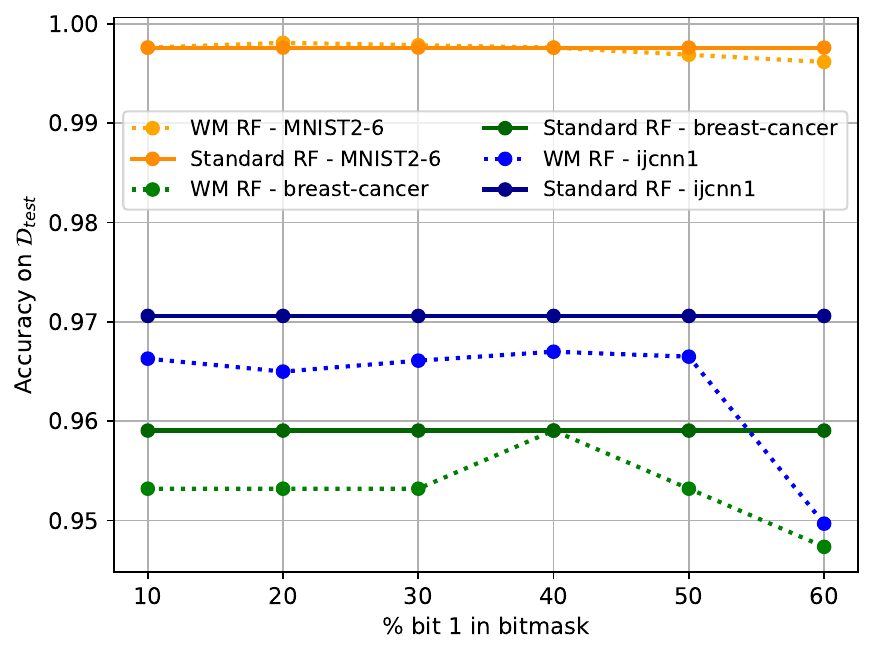}}
  \caption{Accuracy of watermarked models on the test set when varying the percentage of training instances included in $\dtrigger$ (top figure) and the percentage of bits set to 1 in the signature $\sigma$ (bottom figure).}
\end{figure}

Of course, the number of bits set to 1 in the signature might also impact the accuracy of the watermarked model, because such bits denote forced prediction errors. Figure~\ref{fig:accuracy2} shows how accuracy changes when we increase the number of bits set to 1 in the signature, given a fixed trigger set (including 2\% of the training data). Again, the accuracy loss is small in practice, with the largest drop in accuracy amounting to around two points.

\subsection{Security Evaluation}
We focus in particular on watermark detection and watermark forgery, because protection against watermark suppression is immediately achieved by construction. We assume that $\sigma$ includes 50\% of the bits set to 1 and $\dtrigger$ includes 2\% of the training set. 

\begin{table*}[t]
    \centering
    \caption{Number of trees correctly/wrongly associated with their bits, using two watermark detection strategies. For each dataset, \textit{mean} and \textit{standard deviation} of ``Depth'' and ``\#leaves'' are reported in round brackets.}
    \begin{tabular}{c|c|c|c|c}
    \toprule
    \textbf{Dataset} & \textbf{Hyper-Parameters} &   \textbf{\#correct} & \textbf{\#wrong} & \textbf{\#uncertain} \\
    \midrule
    \multirow{2}{*}{MNIST2-6} & Depth (19.82 - 2.69) & \textcolor{red}{31} / \textcolor{blue}{57} & \textcolor{red}{11} / \textcolor{blue}{33} & \textcolor{red}{48} / \textcolor{blue}{0} \\
    & \#leaves (229.99 - 0.10) & \textcolor{red}{1} / \textcolor{blue}{46} & \textcolor{red}{0} / \textcolor{blue}{44} & \textcolor{red}{89} / \textcolor{blue}{0} \\
    \midrule
    \multirow{2}{*}{breast-cancer} & Depth (7.03 - 0.81) & \textcolor{red}{34} /
    \textcolor{blue}{46} & \textcolor{red}{9} / \textcolor{blue}{24} & \textcolor{red}{27} / \textcolor{blue}{0} \\
    & \#leaves (18.90 - 0.45) & \textcolor{red}{4} / \textcolor{blue}{39} & \textcolor{red}{0} / \textcolor{blue}{31} & \textcolor{red}{66} / \textcolor{blue}{0} \\
    \midrule
    \multirow{2}{*}{ijcnn1} & Depth (18.00 - 0.00) & \textcolor{red}{0} / \textcolor{blue}{40} & \textcolor{red}{0} / \textcolor{blue}{40} & \textcolor{red}{80} / \textcolor{blue}{0} \\
    & \#leaves (498.88 - 5.86) & \textcolor{red}{0} /
    \textcolor{blue}{37} & \textcolor{red}{3} / \textcolor{blue}{43} & \textcolor{red}{77} / \textcolor{blue}{0} \\
    \bottomrule
    \end{tabular}
\label{tab:detection-mean-std}
\end{table*}

\subsubsection{Watermark Detection}
We compare the depth and the number of leaves of the trees corresponding to bits set to 0 and to 1 in the signature $\sigma$ to understand whether there are relevant differences leaking information about $\sigma$. This is a significant threat, because trees associated with a bit set to 1 are forced to make prediction errors in the trigger set, hence they might grow larger than the other trees when trying to achieve overfitting. We simulate two watermark detection strategies by means of the following experiment: given a hyper-parameter like depth or number of leaves, the attacker computes its mean and standard deviation over the ensemble. Intuitively, ``small'' trees are more likely to be associated with bit 0 and ``large'' trees are more likely to be associated with bit 1. To formalize this intuition, in our first strategy the attacker associates bit 0 with all trees falling below the difference of the mean and standard deviation, and bit 1 to all trees falling above the sum of mean and standard deviation; all the other trees \so{around the mean} correspond to \textit{uncertain} cases, where the attacker might try random guessing. \so{Note that this strategy may produce a large number of uncertain cases, thus making random guessing of them infeasible for the attacker. However, the technique is interesting because we can check whether it can correctly identify at least the rest of the trees. The second strategy does not produce uncertain trees, as it uses the mean as a \so{sharp} threshold to determine whether a tree is associated with bit 0 or 1.} 
Table~\ref{tab:detection-mean-std} reports the results, showing that both the attack strategies are ineffective. 
The first strategy (in \textcolor{red}{red}) yields a huge number of uncertain cases\so{, but surprisingly it also produces wrong predictions for the rest of the trees.} 
The second strategy (in \textcolor{blue}{blue}) has no uncertainty, but produces many prediction errors and is unable to reconstruct the signature. \so{Finally, we can observe that standard deviation values are relatively small compared to the values of the associated means. Therefore, the trees trained by our techniques are all similar to each other, thus making it very difficult for an attacker to identify $\sigma$.}

\subsubsection{Watermark Forgery}
To show security against watermark forgery, we simulate a scenario where the attacker generates a fake signature $\sigma'$ and tries to forge a trigger set $\dtrigger'$ where the watermarked model exhibits the output required by $\sigma'$. We showed that this requires solving an NP-hard problem, however recent advances in automated verification enable dealing with large inputs even for computationally intensive problems, hence we complement our theoretical analysis with empirical evidence. We implement our forgery attempts by generating 10 random signatures and solving a satisfiability problem for a logical formula encoding the expected model output using Z3, a state-of-the-art SMT solver~\cite{MouraB08}. For each fake signature, we iterate over all the instances in the test set and we look for a satisfying assignment for our logical formula, while requiring that the $L_\infty$-distance between the solution and the original test instance is bounded by some $0 < \varepsilon < 1$. The distance constraint is useful to ensure that the forged trigger set $\dtrigger'$ is reminiscent of real test instances. We are in fact assuming that, as usual, the test set has the same distribution of the training set. 

Our experiment shows different results on the different datasets. In the case of breast-cancer, the forged trigger set reaches at most 14\% of the size of the original trigger set, even when setting a high $\varepsilon = 0.9$. This is explained by the fact that Z3 does not find satisfying assignments for most of the logical formulas, hence the legitimate model owner is the only one who is able to present a trigger set of significant size. In the case of ijcnn1, instead, the forged trigger set is just 1\% of the size of the original trigger set on average for $\varepsilon = 0.1$. Forging a trigger set of the same size as the original trigger set for $\varepsilon > 0.1$ does not scale, already requiring more than four hours for a single bitmask for $\varepsilon = 0.3$. The reason is that the ensemble for ijcnn1 contains more than twice the leaves of the ensembles for the other two datasets, making the satisfiability problem more difficult. The results are more interesting for the MNIST2-6 dataset and we visualize them in Figure~\ref{fig:forgery}. The figure shows that, when $\varepsilon$ increases, it becomes easier to forge trigger sets of comparable size to the original trigger set. However, the amount of distortion required by the forgery makes it easy to detect such malicious attempts, because the size of the forged trigger set become comparable to the original only when $\varepsilon \geq 0.7$. Figure~\ref{fig:forgery-dissimilar} shows three forged images of $\dtrigger'$ for increasing values of $\varepsilon \in \{0.3, 0.5, 0.7\}$. As we can see, the image with the highest amount of distortion is rather blurry and quite far from the original image. Indeed, a standard decision tree ensemble achieves 0.99 accuracy on the original trigger set, while its accuracy drops to 0.62 on the forged trigger set.

\begin{figure}[t]
  \centering
  \includegraphics[width=.35\textwidth]{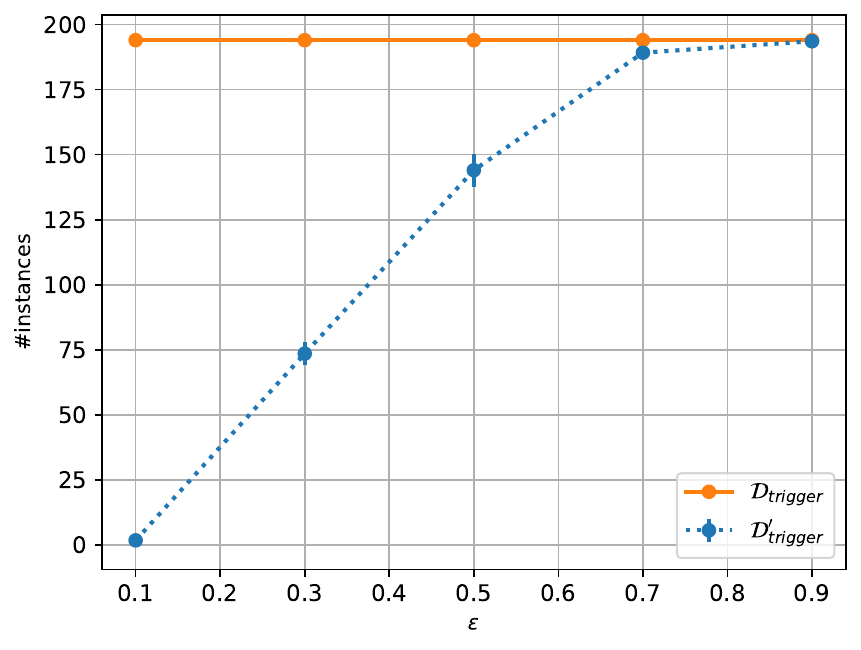}
  \hfill
  \caption{Size of the forged trigger set $\dtrigger'$ when varying the amount of distortion $\varepsilon$ on the MNIST2-6 dataset.}
  \label{fig:forgery}
\end{figure}

\begin{figure}[t]
  \centering
\includegraphics[width=.45\textwidth]{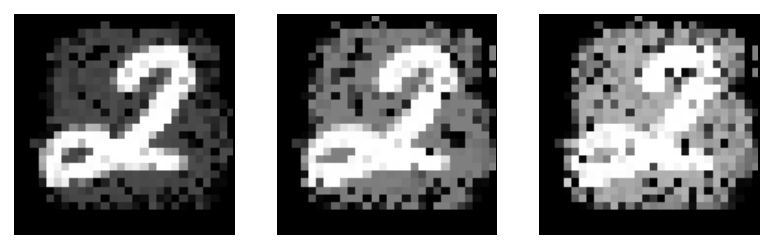}
  \caption{Instances generated by Z3 for $\varepsilon \in \{0.3, 0.5, 0.7\}$.}
  \label{fig:forgery-dissimilar}
\end{figure}

\section{Conclusion}
We proposed the first watermarking scheme designed for decision tree ensembles and we motivated the security of our construction. Our experimental evaluation on public datasets shows promising results, because watermarked models largely preserve their accuracy and are robust against relevant attacks. As future work, we plan to extend our security analysis to more powerful attackers, e.g., who are able to modify the watermarked model and forge trigger sets using more sophisticated strategies. We would also like to generalize our watermarking scheme to more advanced decision tree ensembles, such as those trained using gradient boosting.


\bibliographystyle{ACM-Reference-Format}
\bibliography{biblio.bib}


\begin{thebibliography}{18}


\ifx \showCODEN    \undefined \def \showCODEN     #1{\unskip}     \fi
\ifx \showDOI      \undefined \def \showDOI       #1{#1}\fi
\ifx \showISBNx    \undefined \def \showISBNx     #1{\unskip}     \fi
\ifx \showISBNxiii \undefined \def \showISBNxiii  #1{\unskip}     \fi
\ifx \showISSN     \undefined \def \showISSN      #1{\unskip}     \fi
\ifx \showLCCN     \undefined \def \showLCCN      #1{\unskip}     \fi
\ifx \shownote     \undefined \def \shownote      #1{#1}          \fi
\ifx \showarticletitle \undefined \def \showarticletitle #1{#1}   \fi
\ifx \showURL      \undefined \def \showURL       {\relax}        \fi
\providecommand\bibfield[2]{#2}
\providecommand\bibinfo[2]{#2}
\providecommand\natexlab[1]{#1}
\providecommand\showeprint[2][]{arXiv:#2}

\bibitem[Adi et~al\mbox{.}(2018)]%
        {AdiBCPK18}
\bibfield{author}{\bibinfo{person}{Yossi Adi}, \bibinfo{person}{Carsten Baum}, \bibinfo{person}{Moustapha Ciss{\'{e}}}, \bibinfo{person}{Benny Pinkas}, {and} \bibinfo{person}{Joseph Keshet}.} \bibinfo{year}{2018}\natexlab{}.
\newblock \showarticletitle{Turning Your Weakness Into a Strength: Watermarking Deep Neural Networks by Backdooring}. In \bibinfo{booktitle}{\emph{27th {USENIX} Security Symposium, {USENIX} Security 2018, Baltimore, MD, USA, August 15-17, 2018}}, \bibfield{editor}{\bibinfo{person}{William Enck} {and} \bibinfo{person}{Adrienne~Porter Felt}} (Eds.). \bibinfo{publisher}{{USENIX} Association}, \bibinfo{pages}{1615--1631}.
\newblock
\urldef\tempurl%
\url{https://www.usenix.org/conference/usenixsecurity18/presentation/adi}
\showURL{%
\tempurl}


\bibitem[Boenisch(2021)]%
        {Boenisch21}
\bibfield{author}{\bibinfo{person}{Franziska Boenisch}.} \bibinfo{year}{2021}\natexlab{}.
\newblock \showarticletitle{A Systematic Review on Model Watermarking for Neural Networks}.
\newblock \bibinfo{journal}{\emph{Frontiers Big Data}}  \bibinfo{volume}{4} (\bibinfo{year}{2021}), \bibinfo{pages}{729663}.
\newblock
\urldef\tempurl%
\url{https://doi.org/10.3389/FDATA.2021.729663}
\showDOI{\tempurl}


\bibitem[Breiman et~al\mbox{.}(1984)]%
        {BreimanFOS84}
\bibfield{author}{\bibinfo{person}{Leo Breiman}, \bibinfo{person}{J.~H. Friedman}, \bibinfo{person}{R.~A. Olshen}, {and} \bibinfo{person}{C.~J. Stone}.} \bibinfo{year}{1984}\natexlab{}.
\newblock \bibinfo{booktitle}{\emph{Classification and Regression Trees}}.
\newblock \bibinfo{publisher}{Wadsworth}.
\newblock
\showISBNx{0-534-98053-8}


\bibitem[Chen et~al\mbox{.}(2019)]%
        {abs-1904-00344}
\bibfield{author}{\bibinfo{person}{Huili Chen}, \bibinfo{person}{Bita~Darvish Rouhani}, {and} \bibinfo{person}{Farinaz Koushanfar}.} \bibinfo{year}{2019}\natexlab{}.
\newblock \showarticletitle{BlackMarks: Blackbox Multibit Watermarking for Deep Neural Networks}.
\newblock \bibinfo{journal}{\emph{CoRR}}  \bibinfo{volume}{abs/1904.00344} (\bibinfo{year}{2019}).
\newblock
\showeprint[arXiv]{1904.00344}
\urldef\tempurl%
\url{http://arxiv.org/abs/1904.00344}
\showURL{%
\tempurl}


\bibitem[de~Moura and Bj{\o}rner(2008)]%
        {MouraB08}
\bibfield{author}{\bibinfo{person}{Leonardo~Mendon{\c{c}}a de Moura} {and} \bibinfo{person}{Nikolaj~S. Bj{\o}rner}.} \bibinfo{year}{2008}\natexlab{}.
\newblock \showarticletitle{{Z3:} An Efficient {SMT} Solver}. In \bibinfo{booktitle}{\emph{Tools and Algorithms for the Construction and Analysis of Systems, 14th International Conference, {TACAS} 2008, Held as Part of the Joint European Conferences on Theory and Practice of Software, {ETAPS} 2008, Budapest, Hungary, March 29-April 6, 2008. Proceedings}} \emph{(\bibinfo{series}{Lecture Notes in Computer Science}, Vol.~\bibinfo{volume}{4963})}, \bibfield{editor}{\bibinfo{person}{C.~R. Ramakrishnan} {and} \bibinfo{person}{Jakob Rehof}} (Eds.). \bibinfo{publisher}{Springer}, \bibinfo{pages}{337--340}.
\newblock
\urldef\tempurl%
\url{https://doi.org/10.1007/978-3-540-78800-3\_24}
\showDOI{\tempurl}


\bibitem[Fan et~al\mbox{.}(2019)]%
        {FanNC19}
\bibfield{author}{\bibinfo{person}{Lixin Fan}, \bibinfo{person}{Kam~Woh Ng}, {and} \bibinfo{person}{Chee~Seng Chan}.} \bibinfo{year}{2019}\natexlab{}.
\newblock \showarticletitle{Rethinking Deep Neural Network Ownership Verification: Embedding Passports to Defeat Ambiguity Attacks}. In \bibinfo{booktitle}{\emph{Advances in Neural Information Processing Systems 32: Annual Conference on Neural Information Processing Systems 2019, NeurIPS 2019, December 8-14, 2019, Vancouver, BC, Canada}}, \bibfield{editor}{\bibinfo{person}{Hanna~M. Wallach}, \bibinfo{person}{Hugo Larochelle}, \bibinfo{person}{Alina Beygelzimer}, \bibinfo{person}{Florence d'Alch{\'{e}}{-}Buc}, \bibinfo{person}{Emily~B. Fox}, {and} \bibinfo{person}{Roman Garnett}} (Eds.). \bibinfo{pages}{4716--4725}.
\newblock
\urldef\tempurl%
\url{https://proceedings.neurips.cc/paper/2019/hash/75455e062929d32a333868084286bb68-Abstract.html}
\showURL{%
\tempurl}


\bibitem[Guo and Potkonjak(2018)]%
        {GuoP18}
\bibfield{author}{\bibinfo{person}{Jia Guo} {and} \bibinfo{person}{Miodrag Potkonjak}.} \bibinfo{year}{2018}\natexlab{}.
\newblock \showarticletitle{Watermarking deep neural networks for embedded systems}. In \bibinfo{booktitle}{\emph{Proceedings of the International Conference on Computer-Aided Design, {ICCAD} 2018, San Diego, CA, USA, November 05-08, 2018}}, \bibfield{editor}{\bibinfo{person}{Iris Bahar}} (Ed.). \bibinfo{publisher}{{ACM}}, \bibinfo{pages}{133}.
\newblock
\urldef\tempurl%
\url{https://doi.org/10.1145/3240765.3240862}
\showDOI{\tempurl}


\bibitem[Kuribayashi et~al\mbox{.}(2020)]%
        {KuribayashiTF20}
\bibfield{author}{\bibinfo{person}{Minoru Kuribayashi}, \bibinfo{person}{Takuro Tanaka}, {and} \bibinfo{person}{Nobuo Funabiki}.} \bibinfo{year}{2020}\natexlab{}.
\newblock \showarticletitle{DeepWatermark: Embedding Watermark into {DNN} Model}. In \bibinfo{booktitle}{\emph{Asia-Pacific Signal and Information Processing Association Annual Summit and Conference, {APSIPA} 2020, Auckland, New Zealand, December 7-10, 2020}}. \bibinfo{publisher}{{IEEE}}, \bibinfo{pages}{1340--1346}.
\newblock
\urldef\tempurl%
\url{https://ieeexplore.ieee.org/document/9306331}
\showURL{%
\tempurl}


\bibitem[Li(2020)]%
        {GPT3}
\bibfield{author}{\bibinfo{person}{Chuan Li}.} \bibinfo{year}{2020}\natexlab{}.
\newblock \bibinfo{title}{{Demystifying GPT-3}}.
\newblock
\newblock
\newblock
\shownote{\url{https://lambdalabs.com/blog/demystifying-gpt-3} [Accessed: (23 May 2024)]}.


\bibitem[Li et~al\mbox{.}(2021)]%
        {LiWB21}
\bibfield{author}{\bibinfo{person}{Yue Li}, \bibinfo{person}{Hongxia Wang}, {and} \bibinfo{person}{Mauro Barni}.} \bibinfo{year}{2021}\natexlab{}.
\newblock \showarticletitle{A survey of Deep Neural Network watermarking techniques}.
\newblock \bibinfo{journal}{\emph{Neurocomputing}}  \bibinfo{volume}{461} (\bibinfo{year}{2021}), \bibinfo{pages}{171--193}.
\newblock
\urldef\tempurl%
\url{https://doi.org/10.1016/J.NEUCOM.2021.07.051}
\showDOI{\tempurl}


\bibitem[Namba and Sakuma(2019)]%
        {NambaS19}
\bibfield{author}{\bibinfo{person}{Ryota Namba} {and} \bibinfo{person}{Jun Sakuma}.} \bibinfo{year}{2019}\natexlab{}.
\newblock \showarticletitle{Robust Watermarking of Neural Network with Exponential Weighting}. In \bibinfo{booktitle}{\emph{Proceedings of the 2019 {ACM} Asia Conference on Computer and Communications Security, AsiaCCS 2019, Auckland, New Zealand, July 09-12, 2019}}, \bibfield{editor}{\bibinfo{person}{Steven~D. Galbraith}, \bibinfo{person}{Giovanni Russello}, \bibinfo{person}{Willy Susilo}, \bibinfo{person}{Dieter Gollmann}, \bibinfo{person}{Engin Kirda}, {and} \bibinfo{person}{Zhenkai Liang}} (Eds.). \bibinfo{publisher}{{ACM}}, \bibinfo{pages}{228--240}.
\newblock
\urldef\tempurl%
\url{https://doi.org/10.1145/3321705.3329808}
\showDOI{\tempurl}


\bibitem[Rouhani et~al\mbox{.}(2019)]%
        {RouhaniCK19}
\bibfield{author}{\bibinfo{person}{Bita~Darvish Rouhani}, \bibinfo{person}{Huili Chen}, {and} \bibinfo{person}{Farinaz Koushanfar}.} \bibinfo{year}{2019}\natexlab{}.
\newblock \showarticletitle{DeepSigns: An End-to-End Watermarking Framework for Ownership Protection of Deep Neural Networks}. In \bibinfo{booktitle}{\emph{Proceedings of the Twenty-Fourth International Conference on Architectural Support for Programming Languages and Operating Systems, {ASPLOS} 2019, Providence, RI, USA, April 13-17, 2019}}, \bibfield{editor}{\bibinfo{person}{Iris Bahar}, \bibinfo{person}{Maurice Herlihy}, \bibinfo{person}{Emmett Witchel}, {and} \bibinfo{person}{Alvin~R. Lebeck}} (Eds.). \bibinfo{publisher}{{ACM}}, \bibinfo{pages}{485--497}.
\newblock
\urldef\tempurl%
\url{https://doi.org/10.1145/3297858.3304051}
\showDOI{\tempurl}


\bibitem[Szyller et~al\mbox{.}(2021)]%
        {SzyllerAMA21}
\bibfield{author}{\bibinfo{person}{Sebastian Szyller}, \bibinfo{person}{Buse~Gul Atli}, \bibinfo{person}{Samuel Marchal}, {and} \bibinfo{person}{N. Asokan}.} \bibinfo{year}{2021}\natexlab{}.
\newblock \showarticletitle{{DAWN:} Dynamic Adversarial Watermarking of Neural Networks}. In \bibinfo{booktitle}{\emph{{MM} '21: {ACM} Multimedia Conference, Virtual Event, China, October 20 - 24, 2021}}, \bibfield{editor}{\bibinfo{person}{Heng~Tao Shen}, \bibinfo{person}{Yueting Zhuang}, \bibinfo{person}{John~R. Smith}, \bibinfo{person}{Yang Yang}, \bibinfo{person}{Pablo C{\'{e}}sar}, \bibinfo{person}{Florian Metze}, {and} \bibinfo{person}{Balakrishnan Prabhakaran}} (Eds.). \bibinfo{publisher}{{ACM}}, \bibinfo{pages}{4417--4425}.
\newblock
\urldef\tempurl%
\url{https://doi.org/10.1145/3474085.3475591}
\showDOI{\tempurl}


\bibitem[Tang et~al\mbox{.}(2023a)]%
        {TangDH23}
\bibfield{author}{\bibinfo{person}{Ruixiang Tang}, \bibinfo{person}{Mengnan Du}, {and} \bibinfo{person}{Xia Hu}.} \bibinfo{year}{2023}\natexlab{a}.
\newblock \showarticletitle{Deep Serial Number: Computational Watermark for {DNN} Intellectual Property Protection}. In \bibinfo{booktitle}{\emph{Machine Learning and Knowledge Discovery in Databases: Applied Data Science and Demo Track - European Conference, {ECML} {PKDD} 2023, Turin, Italy, September 18-22, 2023, Proceedings, Part {VI}}} \emph{(\bibinfo{series}{Lecture Notes in Computer Science}, Vol.~\bibinfo{volume}{14174})}, \bibfield{editor}{\bibinfo{person}{Gianmarco De~Francisci Morales}, \bibinfo{person}{Claudia Perlich}, \bibinfo{person}{Natali Ruchansky}, \bibinfo{person}{Nicolas Kourtellis}, \bibinfo{person}{Elena Baralis}, {and} \bibinfo{person}{Francesco Bonchi}} (Eds.). \bibinfo{publisher}{Springer}, \bibinfo{pages}{157--173}.
\newblock
\urldef\tempurl%
\url{https://doi.org/10.1007/978-3-031-43427-3\_10}
\showDOI{\tempurl}


\bibitem[Tang et~al\mbox{.}(2023b)]%
        {TangJDWJH23}
\bibfield{author}{\bibinfo{person}{Ruixiang Tang}, \bibinfo{person}{Hongye Jin}, \bibinfo{person}{Mengnan Du}, \bibinfo{person}{Curtis Wigington}, \bibinfo{person}{Rajiv Jain}, {and} \bibinfo{person}{Xia Hu}.} \bibinfo{year}{2023}\natexlab{b}.
\newblock \showarticletitle{Exposing Model Theft: {A} Robust and Transferable Watermark for Thwarting Model Extraction Attacks}. In \bibinfo{booktitle}{\emph{Proceedings of the 32nd {ACM} International Conference on Information and Knowledge Management, {CIKM} 2023, Birmingham, United Kingdom, October 21-25, 2023}}, \bibfield{editor}{\bibinfo{person}{Ingo Frommholz}, \bibinfo{person}{Frank Hopfgartner}, \bibinfo{person}{Mark Lee}, \bibinfo{person}{Michael Oakes}, \bibinfo{person}{Mounia Lalmas}, \bibinfo{person}{Min Zhang}, {and} \bibinfo{person}{Rodrygo L.~T. Santos}} (Eds.). \bibinfo{publisher}{{ACM}}, \bibinfo{pages}{4315--4319}.
\newblock
\urldef\tempurl%
\url{https://doi.org/10.1145/3583780.3615211}
\showDOI{\tempurl}


\bibitem[Tartaglione et~al\mbox{.}(2020)]%
        {TartaglioneGCB20}
\bibfield{author}{\bibinfo{person}{Enzo Tartaglione}, \bibinfo{person}{Marco Grangetto}, \bibinfo{person}{Davide Cavagnino}, {and} \bibinfo{person}{Marco Botta}.} \bibinfo{year}{2020}\natexlab{}.
\newblock \showarticletitle{Delving in the loss landscape to embed robust watermarks into neural networks}. In \bibinfo{booktitle}{\emph{25th International Conference on Pattern Recognition, {ICPR} 2020, Virtual Event / Milan, Italy, January 10-15, 2021}}. \bibinfo{publisher}{{IEEE}}, \bibinfo{pages}{1243--1250}.
\newblock
\urldef\tempurl%
\url{https://doi.org/10.1109/ICPR48806.2021.9413062}
\showDOI{\tempurl}


\bibitem[Uchida et~al\mbox{.}(2017)]%
        {UchidaNSS17}
\bibfield{author}{\bibinfo{person}{Yusuke Uchida}, \bibinfo{person}{Yuki Nagai}, \bibinfo{person}{Shigeyuki Sakazawa}, {and} \bibinfo{person}{Shin'ichi Satoh}.} \bibinfo{year}{2017}\natexlab{}.
\newblock \showarticletitle{Embedding Watermarks into Deep Neural Networks}. In \bibinfo{booktitle}{\emph{Proceedings of the 2017 {ACM} on International Conference on Multimedia Retrieval, {ICMR} 2017, Bucharest, Romania, June 6-9, 2017}}, \bibfield{editor}{\bibinfo{person}{Bogdan Ionescu}, \bibinfo{person}{Nicu Sebe}, \bibinfo{person}{Jiashi Feng}, \bibinfo{person}{Martha~A. Larson}, \bibinfo{person}{Rainer Lienhart}, {and} \bibinfo{person}{Cees Snoek}} (Eds.). \bibinfo{publisher}{{ACM}}, \bibinfo{pages}{269--277}.
\newblock
\urldef\tempurl%
\url{https://doi.org/10.1145/3078971.3078974}
\showDOI{\tempurl}


\bibitem[Zhang et~al\mbox{.}(2020)]%
        {ZhangCLFZZCY20}
\bibfield{author}{\bibinfo{person}{Jie Zhang}, \bibinfo{person}{Dongdong Chen}, \bibinfo{person}{Jing Liao}, \bibinfo{person}{Han Fang}, \bibinfo{person}{Weiming Zhang}, \bibinfo{person}{Wenbo Zhou}, \bibinfo{person}{Hao Cui}, {and} \bibinfo{person}{Nenghai Yu}.} \bibinfo{year}{2020}\natexlab{}.
\newblock \showarticletitle{Model Watermarking for Image Processing Networks}. In \bibinfo{booktitle}{\emph{The Thirty-Fourth {AAAI} Conference on Artificial Intelligence, {AAAI} 2020, The Thirty-Second Innovative Applications of Artificial Intelligence Conference, {IAAI} 2020, The Tenth {AAAI} Symposium on Educational Advances in Artificial Intelligence, {EAAI} 2020, New York, NY, USA, February 7-12, 2020}}. \bibinfo{publisher}{{AAAI} Press}, \bibinfo{pages}{12805--12812}.
\newblock
\urldef\tempurl%
\url{https://doi.org/10.1609/AAAI.V34I07.6976}
\showDOI{\tempurl}


\end{thebibliography}


\end{document}